\theoremstyle{definition}
\newtheorem{definition}{Definition}
\theoremstyle{theorem}
\newtheorem{theorem}{Theorem}
\theoremstyle{corollary}
\newtheorem{corollary}{Corollary}
\title{Evolutionary Stochastic Gradient Descent for Optimization of Deep Neural Networks}
\author{
  Xiaodong Cui, \  Wei Zhang, \  Zolt\'an T\"uske \ and  \ Michael Picheny \\
  IBM Research AI\\
  IBM T. J. Watson Research Center\\
  Yorktown Heights, NY 10598, USA \\
  \texttt{\{cuix, weiz, picheny\}@us.ibm.com}, \texttt{\{Zoltan.Tuske\}@ibm.com} \\
}
\begin{document}

\maketitle

\begin{abstract}
We propose a population-based Evolutionary Stochastic Gradient Descent (ESGD) framework for optimizing deep neural networks. ESGD combines SGD and gradient-free evolutionary algorithms as complementary algorithms in one framework in which the optimization alternates between the SGD step and evolution step to improve the average fitness of the population. With a back-off strategy in the SGD step and an elitist strategy in the evolution step, it guarantees that the best fitness in the population will never degrade. In addition, individuals in the population optimized with various SGD-based optimizers using distinct hyper-parameters in the SGD step are considered as competing species in a coevolution setting such that the complementarity of the optimizers is also taken into account. The effectiveness of ESGD is demonstrated across multiple applications including speech recognition, image recognition and language modeling, using networks with a variety of deep architectures.
\end{abstract}

\section{Introduction}
\label{sec:intro}

Stochastic gradient descent (SGD) is the dominant technique in deep neural network optimization \cite{Bottou_OptML}. Over the years, a wide variety of SGD-based algorithms have been developed \cite{Duchi_adagrad, Kingma_adam, Hinton_RMSProp, Nesterov_NAG}. SGD algorithms have proved to be effective in optimization of large-scale deep learning models. Meanwhile, gradient-free evolutionary algorithms (EA) \cite{Beyer_ESIntro, Goldberg_GA, Igel_Neuroevol, Hansen_CMA, Loshchilov_LMCMA} also have been used in various applications. They represent another family of so-called black-box optimization techniques which are well suited for some non-linear, non-convex or non-smooth optimization problems. Biologically inspired, population-based EA make no assumptions on the optimization landscape. The population evolves based on genetic variation and selection towards better solutions of the problems of interest. In deep learning applications, EA such as genetic algorithms (GA), evolutionary strategies (ES) and neuroevolution have been used for optimizing neural network architectures \cite{Real_EvolImageClassifier, Real_RegEvolImageClassifier, Liang_EvolArchDMN, Ebrahimi_GradfreeArchSearch, Miikkulainen_EvolDNN} and tuning hyper-parameters \cite{Loshchilov_CMAESHyperp, Jaderberg_PopNN}. Applying EA to the direct optimization of deep neural networks is less common. In \cite{Morse_EvolOpt}, a simple EA is shown to be competitive to SGD when optimizing a small neural network  (around 1,000 parameters). However, competitive performance on a state-of-the-art deep neural network with complex architectures and more parameters is yet to be seen.

The complementarity between SGD and EA is worth investigating. While SGD optimizes objective functions based on their gradient or curvature information, gradient-free EA sometimes are advantageous when dealing with complex and poorly-understood optimization landscape. Furthermore, EA are population-based so computation is intrinsically parallel. Hence, implementation is very suitable for large-scale distributed optimization. In this paper we propose Evolutionary Stochastic Gradient Descent (ESGD) -- a framework to combine the merits of SGD and EA by treating them as complementary optimization techniques.

Given an optimization problem, ESGD works with a population of candidate solutions as individuals. Each individual represents a set of model parameters to be optimized by an optimizer (e.g. conventional SGD, Nesterov accelerated SGD or ADAM) with a distinct set of hyper-parameters (e.g. learning rate and momentum). Optimization is carried out by alternating SGD and EA in a stage-wise manner in each generation of the evolution. Following the EA terminology \cite{Beyer_ESIntro, Yang_CoopCoev, GP_COVNET}, consider each individual in the population as a ``species". Over the course of any single generation, each species evolves independently in the SGD step, and then interacts with each other in the EA step. This has the effect of producing more promising candidate solutions for the next generation, which is coevolution in a broad sense. Therefore, ESGD not only integrates EA and SGD as complementary optimization strategies but also makes use of complementary optimizers under this coevolution mechanism. We evaluated ESGD in a variety of tasks. Experimental results showed the effectiveness of ESGD across all of these tasks in improving performance.

\section{Related Work}
\label{sec:related}

The proposed ESGD is pertinent to neuroevolution \cite{Yao_NE, Such_DeepNeuroevol} which consists of a broad family of techniques to evolve neural networks based on EA. A large amount of work in this domain is devoted to optimizing the networks with respect to their architectures and hyper-parameters  \cite{Real_EvolImageClassifier, Real_RegEvolImageClassifier, Miikkulainen_EvolDNN, Loshchilov_CMAESHyperp, Suganuma_GPCNN, Liu_ProArchSearch}. Recently remarkable progress has been made in reinforcement learning (RL) using ES \cite{Such_DeepNeuroevol, Chrabaszcz_ESAtari, Salimans_ESAlterRL, Zhang_RelationESSGD, Lehman_ESNoJustFDA}. In the reported work, EA is utilized as an alternative approach to SGD and is able to compete with state-of-the-art SGD-based performance in RL with deep architectures. It shows that EA works surprisingly well in RL where only imperfect gradient is available with respect to the final performance. In our work, rather than treating EA as an alternative optimization paradigm to replace SGD, the proposed ESGD attempts to integrate the two as complementary paradigms to optimize the parameters of networks.

The ESGD proposed in this paper carries out population-based optimization which deals with a set of models simultaneously. Many of the neuroevolution approaches also belong to this category. Recently population-based techniques have also been applied to optimize neural networks with deep architectures, most notably population-based training (PBT) in \cite{Jaderberg_PopNN}. Although both ESGD and PBT are population-based optimization strategies whose motivations are similar in spirit, there are clear differences between the two. While evolution is only used for optimizing the hyper-parameters in PBT, ESGD treats EA and SGD as complementary optimizers to directly optimize model parameters and only indirectly optimize hyper-parameters. We investigate ESGD in the conventional setting of supervised learning of deep neural networks with a fixed architecture without explicit tuning of hyper-parameters.  More importantly, ESGD uses a model back-off and elitist strategy to give a theoretical guarantee that the best model in the population will never degrade.

The idea of coevolution is used in the design of ESGD where candidates under different optimizers can be considered as competing species. Coevolution has been widely employed for improved neuroevolution \cite{Yang_CoopCoev, GP_COVNET, Gomez_NECoevol, GP_COVEnsemble} but in cooperative coevolution schemes species typically represent a subcomponent of a solution in order to decompose difficult high-dimensional problems. In ESGD, the coevolution is carried out on competing optimizers to take advantage of their complementarity.

\section{Evolutionary SGD}
\label{sec:esgd}

\subsection{Problem Formulation}
\label{sec:form}

Consider the supervised learning problem. Suppose $\mathcal{X} \subseteq \mathbb{R}^{d_{x}}$ is the input space and $\mathcal{Y} \subseteq  \mathbb{R}^{d_{y}}$ is the output (label) space. The goal of learning is to estimate a function $h$ that maps from the input to the output
\begin{align}
    h(x;\theta): \mathcal{X} \rightarrow \mathcal{Y}
\end{align}
where $x\!\in\!\mathcal{X}$ and $h$ comes from a family of functions parameterized by $\theta\!\in\!\mathbb{R}^{d}$.  A loss function $\ell(h(x;\theta),y)$ is defined on $\mathcal{X}\!\times\!\mathcal{Y}$ to measure the closeness between the prediction $h(x;\theta)$ and the label $y\!\in\!\mathcal{Y}$. A risk function $R(\theta)$ for a given $\theta$ is defined as the expected loss over the underlying joint distribution $p(x,y)$:
\begin{align}
    R(\theta) = \mathbb{E}_{(x,y)}[\ell(h(x;\theta),y)]    \label{eqn:risk}
\end{align}
We want to find a function $h(x;\theta^{*})$ that minimizes this expected risk. In practice, we only have access to a set of training samples $\{(x_{i},y_{i})\}_{i=1}^{n}\!\in\!\mathcal{X}\!\times\!\mathcal{Y}$ which are independently drawn from $p(x,y)$. Accordingly, we minimize the following empirical risk with respect to $n$ samples
\begin{align}
    R_{n}(\theta) = \frac{1}{n} \sum_{i=1}^{n}\ell(h(x_{i};\theta),y_{i}) \triangleq  \frac{1}{n} \sum_{i=1}^{n} l_{i}(\theta)  \label{eqn:emprisk}
\end{align}
where $l_{i}(\theta) \triangleq \ell(h(x_{i};\theta),y_{i})$. Under stochastic programming, Eq.\ref{eqn:emprisk} can be cast as
\begin{align}
    R_{n}(\theta) = \mathbb{E}_{\omega}[l_{\omega}(\theta)] \label{eqn:stoprog}
\end{align}
where $\omega \sim \text{Uniform}\{1, \cdots, n\}$. In the conventional SGD setting, at iteration $k$, a sample $(x_{i_{k}},y_{i_{k}})$, $i_{k} \in \{1, \cdots, n\}$, is drawn at random and the stochastic gradient $\nabla l_{i_{k}}$ is then used to update $\theta$ with an appropriate stepsize $\alpha_{k}>0$:
\begin{align}
     \theta_{k+1} = \theta_{k} - \alpha_{k} \nabla l_{i_{k}}(\theta_{k}).
\end{align}

In conventional SGD optimization of Eq.\ref{eqn:emprisk} or Eq.\ref{eqn:stoprog}, there is only one parameter vector $\theta$ under consideration. We further assume $\theta$ follows some distribution $p(\theta)$ and consider the expected empirical risk over $p(\theta)$
\begin{align}
    J = \mathbb{E}_{\theta}[R_{n}(\theta)]  = \mathbb{E}_{\theta}[\mathbb{E}_{\omega}[l_{\omega}(\theta)]] \label{eqn:esrisk}
\end{align}
In practice, a population of $\mu$ candidate solutions, $\{\theta_{j}\}_{j=1}^{\mu}$, is drawn and we deal with the following average empirical risk of the population
\begin{align}
    J_{\mu} = \frac{1}{\mu} \sum_{j=1}^{\mu} R_{n}(\theta_{j}) = \frac{1}{\mu} \sum_{j=1}^{\mu}\left(\frac{1}{n} \sum_{i=1}^{n} l_{i}(\theta_{j})\right) \label{eqn:esemprisk}
\end{align}

Eq.\ref{eqn:esrisk} and Eq.\ref{eqn:esemprisk} formulate the objective function of the proposed ESGD algorithm. Following the EA terminology, we interpret the empirical risk $R_{n}(\theta)$ given parameter $\theta$ as the fitness function of $\theta$ which we want to minimize. \footnote{Conventionally one wants to increase the fitness. But to keep the notation uncluttered we will define the fitness function here as the risk function which we want to minimize.} We want to choose a population of parameter $\theta$, $\{\theta_{j}\}_{j=1}^{\mu}$, such that the whole population or its selected subset has the best average fitness values.

\begin{definition}[$m$-elitist average fitness]
    Let $\Psi_{\mu} = \{\theta_{1}, \cdots, \theta_{\mu}\}$ be a population with $\mu$ individuals $\theta_{j}$ and let $f$ be a fitness function associated with each individual in the population. Rank the individuals in the ascending order
    \begin{align}
         f(\theta_{1:\mu}) \leq f(\theta_{2:\mu}) \leq \cdots \leq  f(\theta_{\mu:\mu})   \label{eqn:fsort}
    \end{align}
    where $\theta_{k:\mu}$ denotes the $k$-th best individual of the population \cite{Hansen_CMA}. The \textbf{$\bm{m}$-elitist average fitness} of $\Psi_{\mu}$ is defined to be the average of fitness of the first $m$-best individuals $(1\!\leq\! m\! \leq\! \mu)$
   \begin{align}
        J_{\bar{m}:\mu} = \frac{1}{m}\sum_{k=1}^{m} f(\theta_{k:\mu})  \label{eqn:Jm}
   \end{align}
\end{definition}

Note that, when $m=\mu$, $J_{\bar{m}:\mu}$ amounts to the average fitness of the whole population. When $m=1$, $J_{\bar{m}:\mu}= f(\theta_{1:\mu})$, the fitness of the single best individual of the population.

\subsection{Algorithm}
\label{sec:alogrithm}

ESGD iteratively optimizes the $m$-elitist average fitness of the population defined in Eq.\ref{eqn:Jm}. The evolution inside each ESGD generation for improving $J_{\bar{m}:\mu}$ alternates between the SGD step, where each individual $\theta_{j}$ is updated using the stochastic gradient of the fitness function $R_{n}(\theta_{j})$, and the evolution step, where the gradient-free EA is applied using certain transformation and selection operators. The overall procedure is given in Algorithm \ref{alg:esgd}.

\begin{algorithm}
    \caption{Evolutionary Stochastic Gradient Descent (ESGD)}
    \label{alg:esgd}
     \textbf{Input:} generations $K$, \  SGD steps $K_{s}$, \ evolution steps $K_{v}$, \ parent population size $\mu$, \  offspring population size $\lambda$ and elitist level $m$.

     Initialize population $\mbox{\small$\Psi^{(0)}_{\mu} \leftarrow \{\theta^{(0)}_{1}, \cdots, \theta^{(0)}_{\mu}\}$}$;

     \tcp{$K$ generations}
     \For {$k = 1 : K$}
     {
         Update population $ \mbox{\small$\Psi^{(k)}_{\mu} \leftarrow  \Psi^{(k-1)}_{\mu}$} $;

         \vspace{0.2cm}
         \tcp{in parallel}
         \For {$j = 1 : \mu$}
         {
            Pick an optimizer $\mbox{\small$\pi^{(k)}_{j}$}$ for individual $\theta^{(k)}_{j}$;

            Select hyper-parameters of $\mbox{\small$\pi^{(k)}_{j}$}$ and set a learning schedule;

            \tcp{$K_{s}$ SGD steps}
            \For { $s = 1 : K_{s}$ }
            {
                SGD update of individual $\theta^{(k)}_{j}$ using $\mbox{\small$\pi^{(k)}_{j}$}$;

                If the fitness degrades, the individual backs off to the previous step $s\!-\!1$.
            }
         }

         \vspace{0.2cm}
         \tcp{$K_{v}$ evolution steps}
         \For { $v = 1 : K_{v}$}
         {
             Generate offspring population $\mbox{\small$\Psi^{(k)}_{\lambda} \leftarrow \{\theta^{(k)}_{1}, \cdots, \theta^{(k)}_{\lambda}\}$}$;

             Sort the fitness of the parent and offspring population $\mbox{\small$\Psi^{(k)}_{\mu+\lambda} \leftarrow \Psi^{(k)}_{\mu} \bigcup \Psi^{(k)}_{\lambda}$}$ ;

             Select the top $m$ ($m \leq \mu$) individuals with the best fitness ($m$-elitist);

             Update population $\mbox{\small$\Psi^{(k)}_{\mu}$}$ by combining $m$-elitist and randomly selected $\mu\!-\!m$ non-$m$-elitist candidates;
         }

     }
\end{algorithm}

To initialize ESGD, a parent population $\Psi_{\mu}$ with $\mu$ individuals is first created. This population evolves in generations. Each generation consists of an SGD step followed by an evolution step. In the SGD step, an SGD-based optimizer $\pi_{j}$ with certain hyper-parameters and learning schedule is selected for each individual $\theta_{j}$ which is updated by $K_{s}$ epochs. In this step, there is no interaction between the optimizers. From the EA perspective, their gene isolation as a species is preserved. After each epoch, if the individual has a degraded fitness, $\theta_{j}$ will back off to the previous epoch. After the SGD step, the gradient-free evolution step follows. In this step, individuals in the parent population $\mbox{\small$\Psi_{\mu}$}$ start interacting via model combination and mutation to produce an offspring population $\mbox{\small$\Psi_{\lambda}$}$ with $\lambda$ offsprings.  An $m$-elitist strategy is applied to the combined population $\mbox{\small$\Psi_{\mu+\lambda} = \Psi_{\mu} \bigcup \Psi_{\lambda}$}$ where the $m$ ($m\!\leq\!\mu$) individuals with the best fitness are selected, together with the rest $\mu\!-\!m$ randomly selected individuals to form the new parent population $\Psi_{\mu}$ for the next generation.

The following theorem shows that the proposed ESGD given in Algorithm \ref{alg:esgd} guarantees that the $m$-elitist average fitness will never degrade.

\begin{theorem}
Let $\Psi_{\mu}$ be a population with $\mu$ individuals $\{\theta_{j}\}_{j=1}^{\mu}$. Suppose $\Psi_{\mu}$ evolves according to the ESGD algorithm given in Algorithm \ref{alg:esgd} with back-off and $m$-elitist. Then for each generation $k$,
\begin{align*}
        J^{(k)}_{\bar{m}:\mu} \leq J^{(k-1)}_{\bar{m}:\mu}, \ \ \ \ k \geq 1
\end{align*}
\end{theorem}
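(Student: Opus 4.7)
The plan is to split one generation into its two sub-phases (SGD step, then evolution step) and argue that the $m$-elitist average fitness is non-increasing across each of them. Chaining the two inequalities then gives the theorem.

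For the SGD phase I will use the back-off rule: after each of the $K_s$ inner epochs every individual either strictly improves its fitness or rolls back to the previous iterate, so pointwise we have $R_n(\tilde\theta^{(k)}_j)\le R_n(\theta^{(k-1)}_j)$ for every $j=1,\dots,\mu$, where $\tilde\theta^{(k)}_j$ denotes the individual after the SGD step. The key auxiliary fact I need is: if $a_j\le b_j$ for all $j$, then after sorting both sequences in ascending order we still have $a_{j:\mu}\le b_{j:\mu}$ for every $j$. This is a standard rearrangement fact and follows, for example, by induction on $\mu$ (swap the smallest element of each sequence to the front and recurse). Applying it to $a_j=R_n(\tilde\theta^{(k)}_j)$ and $b_j=R_n(\theta^{(k-1)}_j)$, and averaging the first $m$ terms yields
\begin{equation*}
\tilde J^{(k)}_{\bar m:\mu}\;\le\;J^{(k-1)}_{\bar m:\mu}.
\end{equation*}

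For the evolution phase I will use the $m$-elitist selection rule. Let $\tilde\Psi^{(k)}_\mu$ be the parent population after SGD, let $\Psi^{(k)}_\lambda$ be the offspring, and set $\Psi^{(k)}_{\mu+\lambda}=\tilde\Psi^{(k)}_\mu\cup\Psi^{(k)}_\lambda$. Since $\tilde\Psi^{(k)}_\mu\subseteq\Psi^{(k)}_{\mu+\lambda}$, the $m$ best fitness values of the larger set are coordinate-wise no worse than those of the smaller set (again by the same ``adding points can only improve the $k$-th order statistic from above'' argument). Thus the average of the top $m$ of $\Psi^{(k)}_{\mu+\lambda}$ is $\le\tilde J^{(k)}_{\bar m:\mu}$. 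The new population $\Psi^{(k)}_\mu$ is formed by keeping exactly those $m$ elitists plus $\mu-m$ additional (strictly worse) individuals, so the $m$ best of $\Psi^{(k)}_\mu$ are precisely those $m$ elitists. Hence
\begin{equation*}
J^{(k)}_{\bar m:\mu}\;=\;\tfrac1m\sum_{i=1}^{m}f\!\left(\theta^{(k)}_{i:\mu+\lambda}\right)\;\le\;\tilde J^{(k)}_{\bar m:\mu}\;\le\;J^{(k-1)}_{\bar m:\mu},
\end{equation*}
which is the claim. Iterating over the $K_v$ evolution sub-steps (each of which has the same elitist structure) preserves the inequality.

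The only real obstacle is the order-statistic monotonicity lemma, which I have used twice: once for pointwise-dominated sequences (SGD phase) and once for set inclusion (evolution phase). Both are elementary but deserve an explicit one-line justification in the write-up, since the whole theorem rests on them; everything else is bookkeeping about which population plays the role of $\Psi_\mu$ at each stage.
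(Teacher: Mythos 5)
Your proof is correct and follows essentially the same route as the paper's: decompose the generation into the SGD phase (where the back-off rule gives pointwise non-degradation of each individual's fitness) and the evolution phase (where $m$-elitist selection over the union $\Psi^{(k)}_{\mu}\cup\Psi^{(k)}_{\lambda}$ can only improve the top-$m$ average), then chain the two inequalities. The only difference is that you make explicit the order-statistic monotonicity lemma needed to pass from the pointwise bounds $R_{n}(\theta^{(k)}_{j,K_{s}})\le R_{n}(\theta^{(k)}_{j,0})$ to the inequality on the sorted top-$m$ averages, a step the paper's proof treats as immediate.
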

The proof of the theorem is given in the supplementary material.
From the theorem, we also have the following corollary regarding the $m$-elitist average fitness.
\begin{corollary}
$\forall m', \ \  1 \leq m' \leq m$, we have
\begin{align}
        J^{(k)}_{\bar{m}':\mu} \leq J^{(k-1)}_{\bar{m}':\mu}, \ \ \text{for} \ \ k \geq 1.
\end{align}
Particularly, for $m'=1$, we have
\begin{align}
        f^{(k)}(\theta_{1:\mu}) \leq f^{(k-1)}(\theta_{1:\mu}) , \ \ \text{for} \ \ k \geq 1.
\end{align}
The fitness of the best individual in the population never degrades.
\end{corollary}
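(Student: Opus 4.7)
The plan is to extract from the proof of the Theorem the stronger pointwise claim
$f(\theta^{(k)}_{j:\mu})\le f(\theta^{(k-1)}_{j:\mu})$ for every $j=1,\dots,m$, and then derive the corollary by averaging only the first $m'$ of these inequalities. The corollary cannot be obtained purely from the conclusion of the Theorem, because mere monotonicity of the mean of the top $m$ fitnesses does not constrain the mean of the top $m'<m$: rearrangements within the top $m$ could shift mass upward to the best individual while preserving the total. Reopening the argument that underlies the Theorem is therefore unavoidable; fortunately, the pointwise refinement is essentially free.

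First I would analyse the SGD step. The back-off rule in Algorithm \ref{alg:esgd} ensures that each individual's fitness is non-increasing, so writing $\tilde\theta^{(k)}_j$ for the individual after the SGD block, $f(\tilde\theta^{(k)}_j)\le f(\theta^{(k-1)}_j)$ holds index-wise for every $j$. A standard order-statistics argument (for any rank $r$, the number of post-SGD entries that are $\le$ the $r$-th pre-SGD fitness is at least $r$) promotes this coordinate-wise domination to the sorted versions: $f(\tilde\theta^{(k)}_{j:\mu})\le f(\theta^{(k-1)}_{j:\mu})$ for all $j$.

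Next I would handle the evolution step. The $m$-elitist selection produces individuals that are, by construction, the $m$ best of $\Psi^{(k)}_{\mu+\lambda}\supseteq\tilde\Psi^{(k)}_\mu$; enlarging a pool only lowers each order statistic, so the $j$-th best of the elitist set is no worse than $f(\tilde\theta^{(k)}_{j:\mu})$ for every $j\le m$. Moreover, the remaining $\mu-m$ randomly chosen non-elitists all have fitness at least that of the $m$-th elitist, so the top-$m$ ranks of the new $\Psi^{(k)}_\mu$ coincide with the elitist set. Chaining the two inequalities yields $f(\theta^{(k)}_{j:\mu})\le f(\theta^{(k-1)}_{j:\mu})$ for every $j\le m$. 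Summing for $j=1,\dots,m'$ and dividing by $m'$ finishes the first statement; setting $m'=1$ gives $f^{(k)}(\theta_{1:\mu})\le f^{(k-1)}(\theta_{1:\mu})$, the non-degradation of the best individual.

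The only conceptually non-routine step is the rank-preservation lemma used in the SGD analysis, namely that coordinate-wise domination of two unsorted vectors implies the same inequality between their sorted versions. Everything else is set-theoretic: enlarging a pool can only decrease order statistics, and the elitist rule guarantees that the random non-elitist padding cannot displace any elitist from the top-$m$ ranks of the next generation.
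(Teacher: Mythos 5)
Your proof is correct, and it is essentially the refinement that the paper's argument needs but does not spell out. The paper offers no separate proof of the corollary; it simply asserts that it follows ``from the theorem,'' and its proof of the theorem only tracks the $m$-elitist \emph{average}. You are right that the averaged conclusion alone cannot yield the corollary: if the top-two fitnesses move from $(1,3)$ to $(2,2)$ the top-$2$ mean is preserved while the best individual degrades, so monotonicity of $J_{\bar m:\mu}$ does not constrain $J_{\bar m':\mu}$ for $m'<m$. The intended reading of the paper is that the theorem's proof applies verbatim with $m'$ in place of $m$ (which works precisely because $m'\le m$ keeps the relevant ranks inside the elitist set); your version instead extracts the stronger pointwise statement $f(\theta^{(k)}_{j:\mu})\le f(\theta^{(k-1)}_{j:\mu})$ for all $j\le m$ and averages, which proves all the $m'$ cases simultaneously and makes explicit the order-statistics lemma (coordinate-wise domination implies domination of sorted values) that the paper also uses silently in passing from the per-individual back-off inequality to $\tilde J^{(k)}_{\bar m:\mu,K_s}\le \tilde J^{(k)}_{\bar m:\mu,0}$. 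Your handling of the evolution step --- the $j$-th order statistic can only decrease when the pool is enlarged, and the random non-elitist padding cannot displace any of the top $m$ ranks --- is exactly what is needed and matches the role of the $m$-elitist selection in the paper's argument. In short: same skeleton as the paper, but with the genuinely necessary pointwise strengthening made explicit rather than left implicit.
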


\subsection{Implementation}
\label{sec:imp}

In this section, we give the implementation details of ESGD. The initial population is created either by randomized initialization of the weights of networks or by perturbing some existing networks. In the SGD step of each generation, a family of SGD-based optimizers (e.g. conventional SGD and ADAM) is considered. For each selected optimizer, a set of hyper-parameters (e.g. learning rate, momentum, Nesterov acceleration and dropout rate) is chosen and a learning schedule is set. The hyper-parameters are randomly selected from a pre-defined range. In particular, an annealing schedule is applied to the range of the learning rate over generations.

In the evolution step there are a wide variety of evolutionary algorithms that can be considered. Despite following similar biological principles, these algorithms have diverse evolving diagrams. In this work, we use the $(\mu/\rho\!+\!\lambda)$-ES \cite{Beyer_ESIntro}. Specifically, we have the following transformation and selection schemes: \vspace{-0.1cm}
\begin{enumerate}
\item Encoding:  \ \ Parameters are vectorized into a real-valued vector in the continuous space. \vspace{-0.1cm}
\item Selection, recombination and mutation:  \ \  In generation $k$, $\rho$ individuals are selected from the parent population $\Psi^{(k)}_{\mu}$ using roulette wheel selection where the probability of selection is proportional to the fitness of an individual \cite{Goldberg_GA}. An individual with better fitness has a higher probability to be selected. $\lambda$ offsprings are generated to form the offspring population $\Psi^{(k)}_{\lambda}$ by intermediate recombination followed by a perturbation with the zero-mean Gaussian noise, which is given in Eq.\ref{eqn:es_recomb_mutate}. \vspace{-0.1cm}
         \begin{align}
              \theta^{(k)}_{i} = \frac{1}{\rho}\sum_{j=1}^{\rho} \theta^{(k)}_{j} + \epsilon^{(k)}_{i}  \label{eqn:es_recomb_mutate} \vspace{-0.1cm}
         \end{align}
         where $\theta_{i} \in \Psi^{(k)}_{\lambda}$, $\theta_{j} \in \Psi^{(k)}_{\mu}$ and $\epsilon^{(k)}_{i} \sim \mathcal{N}(0,\sigma^{2}_{k})$. An annealing schedule may be applied to the mutation strength $\sigma^{2}_{k}$ over generations. \vspace{-0.1cm}
 \item Fitness evaluation: \ \ After the offspring population is generated, the fitness value for each individual in $\mbox{\small$\Psi^{(k)}_{\mu+\lambda} = \Psi^{(k)}_{\mu} \bigcup \Psi^{(k)}_{\lambda}$}$ is evaluated. \vspace{-0.1cm}
 \item $m$-elitist:  \ \ $m$ ($1\!\leq\!m\!\leq\!\mu$) individuals with the best fitness are first selected from $\Psi^{(k)}_{\mu+\lambda}$. The rest $\mu\!-\!m$ individuals are then randomly selected from the other $\mu\!+\!\lambda\!-\!m$ candidates in $\Psi^{(k)}_{\mu+\lambda}$ to form the parent population $\Psi^{(k+1)}_{\mu}$ of the next generation.
\end{enumerate}

After ESGD training is finished, the candidate with the best fitness in the population $\theta_{1:\mu}$ is used as the final model for classification or regression tasks. All the SGD updates and fitness evaluation are carried out in parallel on a set of GPUs.

\section{Experiments}
\label{sec:exp}

We evaluate the performance of the proposed ESGD on large vocabulary continuous speech recognition (LVCSR), image recognition and language modeling. We compare ESGD with two baseline systems. The first baseline system, denoted ``single baseline" when reporting experimental results, is a well-established single model system with respect to the application under investigation, trained using certain SGD-based optimizer with appropriately selected hyper-parameters following certain training schedule. The second baseline system, denoted ``population baseline", is a population-based system with the same size of population as ESGD. Optimizers being considered are SGD and ADAM except in image recognition where only SGD variants are considered. The optimizers together with their hyper-parameters are randomly decided at the beginning and then fixed for the rest of the training with a pre-determined training schedule. This baseline system is used to mimic the typical hyper-parameter tuning process when training deep neural network models. We also conducted ablation experiments where the evolution step is removed from ESGD to investigate the impact of evolution. The $m$-elitist strategy is applied to 60\% of the parent population.

\subsection{Speech Recognition}

\textbf{BN50}  \ \  The 50-hour Broadcast News is a widely used dataset for speech recognition \cite{Hinton_DNNSPM}. The 50-hour data consists of 45-hour training set and a 5-hour validation set.  The test set comprises 3 hours of audio from 6 broadcasts. The acoustic models we used in the experiments are fully-connected feed-forward network with 6 hidden layers and one softmax output layer with 5,000 states. There are 1,024 units in the first 5 hidden layers and 512 units in the last hidden layer. Sigmoid activation functions are used for all hidden units except the bottom 3 hidden layers in which ReLU functions are used. The fundamental acoustic features are 13-dimensional Perceptual Linear Predictive (PLP) \cite{Hermansky_PLP} coefficients. The input to the network is 9 consecutive 40-dimensional speaker-adapted PLP features after linear discriminative analysis (LDA) projection from adjacent frames.

\textbf{SWB300} \ \ The 300-hour Switchboard dataset is another widely used dataset in speech recognition \cite{Hinton_DNNSPM}. The test set is the Hub5 2000 evaluation set composed of two parts: 2.1 hours of switchboard (SWB) data from 40 speakers and 1.6 hours of call-home (CH) data from 40 different speakers. Acoustic models are bi-directional long short-term memory (LSTM~\cite{Hochreiter97}) networks with 4 LSTM layers. Each layer contains 1,024 cells with 512 on each direction. On top of the LSTM layers, there is a linear bottleneck layer with 256 hidden units followed by a softmax output layer with 32,000 units.  The LSTMs are unrolled 21 frames. The input dimensionality is 140 which comprises 40-dimensional speaker-adapted PLP features after LDA  projection and 100-dimensional speaker embedding vectors (i-vectors \cite{Dehak_ivec}).

The single baseline is trained using SGD with a batch size 128 without momentum for 20 epochs. The initial learning rate is 0.001 for BN50 and 0.025 for SWB300.  The learning rate is annealed by 2x every time the loss on the validation set of the current epoch is worse than the previous epoch and meanwhile the model is backed off to the previous epoch. The population sizes for both baseline and ESGD are 100. The offspring population of ESGD consists of 400 individuals. In ESGD, after 15 generations ($K_{s}=1$), a 5-epoch fine-tuning is applied to each individual with a small learning rate. The details of experimental configuration are given in the supplementary material.

Table~\ref{tab:esgd} shows the results of two baselines and ESGD on BN50 and SWB300, respectively. Both the validation losses and word error rates (WERs) are presented for the best individual and the top 15 individuals of the population. For the top 15 individuals, a range of losses and WERs are presented. From the tables, it can be observed that the best individual of the population in ESGD significantly improves the losses and also improves the WERs over both the single baseline as well as the population baseline. Note that the model with the best loss may not give the best WER in some cases, although typically they correlate well.  The ablation experiment shows that the interaction between individuals in the evolution step of ESGD is helpful, and removing the evolution step hurts the performance in both cases.

\subsection{Image Recognition}

The CIFAR10~\cite{cifar} dataset is a widely used image recognition benchmark. It contains a 50K image training-set and a 10K image test-set. Each image is a 32x32 3-channel color image. The model used in this paper is a depth-20 ResNet model~\cite{resnet} with a 64x10 linear layer in the end. The ResNet is trained under the cross-entropy criterion with batch-normalization. Note that CIFAR10 does not include a validation set. To be consistent with the training set used in the literature, we do not split a validation-set from the training-set. Instead, we evaluate training fitness over the entire training-set. For the single-run baseline, we follow the recipes proposed in ~\cite{FB-ResNet}, in which the initial learning rate is 0.1 and gets annealed by 10x after 81 epochs and then annealed by another 10x at epoch 122. Training finishes in 160 epochs. The model is trained by SGD using Nesterov acceleration with a momentum 0.9. The classification error rate of the single baseline is 8.34\%. In practice, we found for this workload, ESGD works best when only considering SGD optimizer with randomized hyper-parameters (e.g., learning rate and momentum).  We record the detailed experimental configuration in the supplementary material. The CIFAR10 results in Table~\ref{tab:esgd} indicate that ESGD clearly outperforms the two baselines in both training fitness and classification error rates.

\subsection{Language Modeling}
The evaluation of the ESGD algorithm is also carried out on the standard language modeling task Penn Treebank (PTB) dateset \cite{mikolov2010}. Hyper-parameters have been massively optimized over the previous years. The current state-of-the-art results are reported in \cite{merity2018regularizing} and \cite{zolna2018fraternal}. Hence, our focus is to investigate the effect of ESGD on top of a state-of-the-art 1-layer LSTM LM training recipe \cite{zolna2018fraternal}. The results are summarized in Table~\ref{tab:esgd}. Starting from scratch, the single baseline model converged after 574 epochs and achieved a perplexity of 67.3 and 64.6 on the validation and evaluation sets. The population baseline models are initialized by cloning the single baseline and generating offsprings by mutation. Then optimizers (SGD and ADAM) are randomly picked and models are trained for 300 epochs. Randomizing the optimizer includes additional hyper-parameters like the various dropout ratios/models (\cite{JMLR:v15:srivastava14a,pmlr-v28-wan13,Gal:2016,zolna2018fraternal,Ma2017,Laine2017}), batch size, etc. For comparison, the warm restart of the single baseline gives 0.2 perplexity improvement on the test set \cite{Loshchilov2017}. Using ESGD, we also relax the back-off to the initial model by probability of $p_\text{backoff}=0.7$ in each generation. The single baseline model is always added to each generation without any update, which guarantees that the population can not perform worse than the single baseline. The detailed parameter settings are provided in the supplementary material. ESGD without the evolutionary step clearly shows difficulties, and the best model's ``gene'' can not become prevalent in the successive generations according to the proportion of its fitness value. In summary, we observe small but consistent gain by fine-tuning existing, highly optimized model with ESGD.

Note that the above implementation for PTB experiments can be viewed as another variant of ESGD: Suppose we have a well-trained model (e.g. a competitive baseline model) which is always inserted into the population in each generation of evolution. The $m$-elitist strategy will guarantee that the best model in the population is not worse than this well-trained model even if we relax the back-off in SGD with probability.

In Fig.\ref{fig:fitness} we show the fitness as a function of ESGD generations in the four investigated tasks.

\begin{table}[htb]
\caption{Performance of single baseline, population baseline and ESGD on BN50, SWB300, CIFAR10 and PTB.  For ESGD, the tables show the losses and classification error rates of the best individual as well as the top 15 individuals in the population for the first three tasks. In PTB, the perplexities (ppl), which is the exponent of loss, of the validation set and test set are presented. The tables also present the results of the ablation experiments where the evolution step is removed from ESGD. }
\label{tab:esgd}
\smallskip
\centering
\begin{tabular}{lcccc} \hline
  \multicolumn{1}{c}{\multirow{2}{*}{\textbf{BN50}}}  &    \multicolumn{2}{c}{loss}            &    \multicolumn{2}{c}{WER}        \\ \cline{2-5}
    &  $\theta_{1:\mu}$ &  $\theta_{1:\mu} \leftrightarrow  \theta_{15:\mu}$  &  $\theta_{1:\mu}$ &  $\theta_{1:\mu} \leftrightarrow \theta_{15:\mu}$  \\ \hline\hline
  single baseline   &     2.082      &     --                 &     17.4      &     --             \\ \hline
  population baseline   &     2.029      &   [2.029, \ \ 2.062]   &     17.1      &  [16.9, \ \  17.6]  \\ \hline
  ESGD w/o evolution                  &     2.036      &   [2.036, \ \ 2.075]   &     17.4      &  [17.1, \ \  17.7]  \\ \hline
  ESGD        &     1.916      &   [1.916, \ \ 1.920]   &     16.4      &  [16.2, \ \  16.4]  \\ \hline\vspace{-0.1cm}
\end{tabular}

\centering
\begin{tabular}{lcccccc} \hline
 \multicolumn{1}{c}{\multirow{2}{*}{\textbf{SWB300}}}    &    \multicolumn{2}{c}{loss}            &    \multicolumn{2}{c}{SWB WER}      &    \multicolumn{2}{c}{CH WER}         \\ \cline{2-7}
   &  $\theta_{1:\mu}$ &  $\theta_{1:\mu}\leftrightarrow\theta_{15:\mu}$  &  $\theta_{1:\mu}$ &  $\theta_{1:\mu}\leftrightarrow \theta_{15:\mu}$ &  $\theta_{1:\mu}$ &  $\theta_{1:\mu}\leftrightarrow\theta_{15:\mu}$ \\ \hline\hline
  single baseline  &     1.648      &     --                 &     10.4      &     --              &     18.5      &     --             \\ \hline
  population baseline  &     1.645      &   [1.645, \ \ 1.666]   &     10.4      &  [10.3, \ \  10.7]  &     18.2      &  [18.2, \ \  18.8]  \\ \hline
  ESGD w/o evolution   &     1.626      &   [1.626, \ \ 1.641]   &     10.3      &  [10.3, \ \  10.7] &     18.3      &  [18.0, \ \  18.6]  \\ \hline
  ESGD       &     1.551      &   [1.551, \ \ 1.557]   &     10.0      &  [10.0, \ \  10.1]  &     18.2      &  [18.0, \ \  18.3]  \\ \hline\vspace{-0.1cm}
\end{tabular}
\centering
\begin{tabular}{lcccc} \hline
\multicolumn{1}{c}{\multirow{2}{*}{\textbf{CIFAR10}}}   &    \multicolumn{2}{c}{loss}              &    \multicolumn{2}{c}{error rate}         \\ \cline{2-5}
             &  $\theta_{1:\mu}$ &  $\theta_{1:\mu} \leftrightarrow \theta_{15:\mu}$  &  $\theta_{1:\mu}$ &  $\theta_{1:\mu} \leftrightarrow \theta_{15:\mu}$  \\ \hline\hline
  single baseline  &     0.0176     &     --                   &     8.34      &     --             \\ \hline
  population baseline  &     0.0151     &   [0.0151, \ \ 0.0164]   &     8.24      &  [7.90, \ \  8.69]  \\ \hline
    ESGD w/o evolution       &     0.0147     &   [0.0147, \ \ 0.0166]   &     8.49      &  [7.86, \ \  8.53]  \\ \hline
  ESGD        &     0.0142     &   [0.0142, \ \ 0.0159]   &     7.52      &  [7.43, \ \  8.10]  \\ \hline\vspace{-0.1cm}
\end{tabular}
\centering
\begin{tabular}{lcccc} \hline
\multicolumn{1}{c}{\multirow{2}{*}{\textbf{PTB}}}                                &    \multicolumn{2}{c}{validation ppl}        &    \multicolumn{2}{c}{test ppl}         \\ \cline{2-5}
        &  $\theta_{1:\mu}$ &  $\theta_{1:\mu} \leftrightarrow \theta_{15:\mu}$  &  $\theta_{1:\mu}$ &  $\theta_{1:\mu} \leftrightarrow \theta_{15:\mu}$            \\ \hline\hline
  single baseline                     &     67.27      &     --                   &    64.58      &     --              \\ \hline
  population baseline                     &     66.58      &   [66.58,\textcolor{white}{} \ \ 68.04\textcolor{white}{}]      &    63.96      &  [63.96,\textcolor{white}{} \ \ 64.58\textcolor{white}{}]   \\ \hline
  ESGD w/o evolution &     67.27      &   [67.27,\textcolor{white}{} \ \ 79.25\textcolor{white}{}]      &    64.58      &  [64.58,\textcolor{white}{} \ \ 76.64\textcolor{white}{}]   \\ \hline
  ESGD                          &     66.29      &   [66.29, \ \ 66.30]     &    63.73      & [63.72, \ \ 63.74] \\ \hline\vspace{-0.3cm}
\end{tabular}
\end{table}

\begin{figure}[htb]
    \centering
    \includegraphics[width=\textwidth]{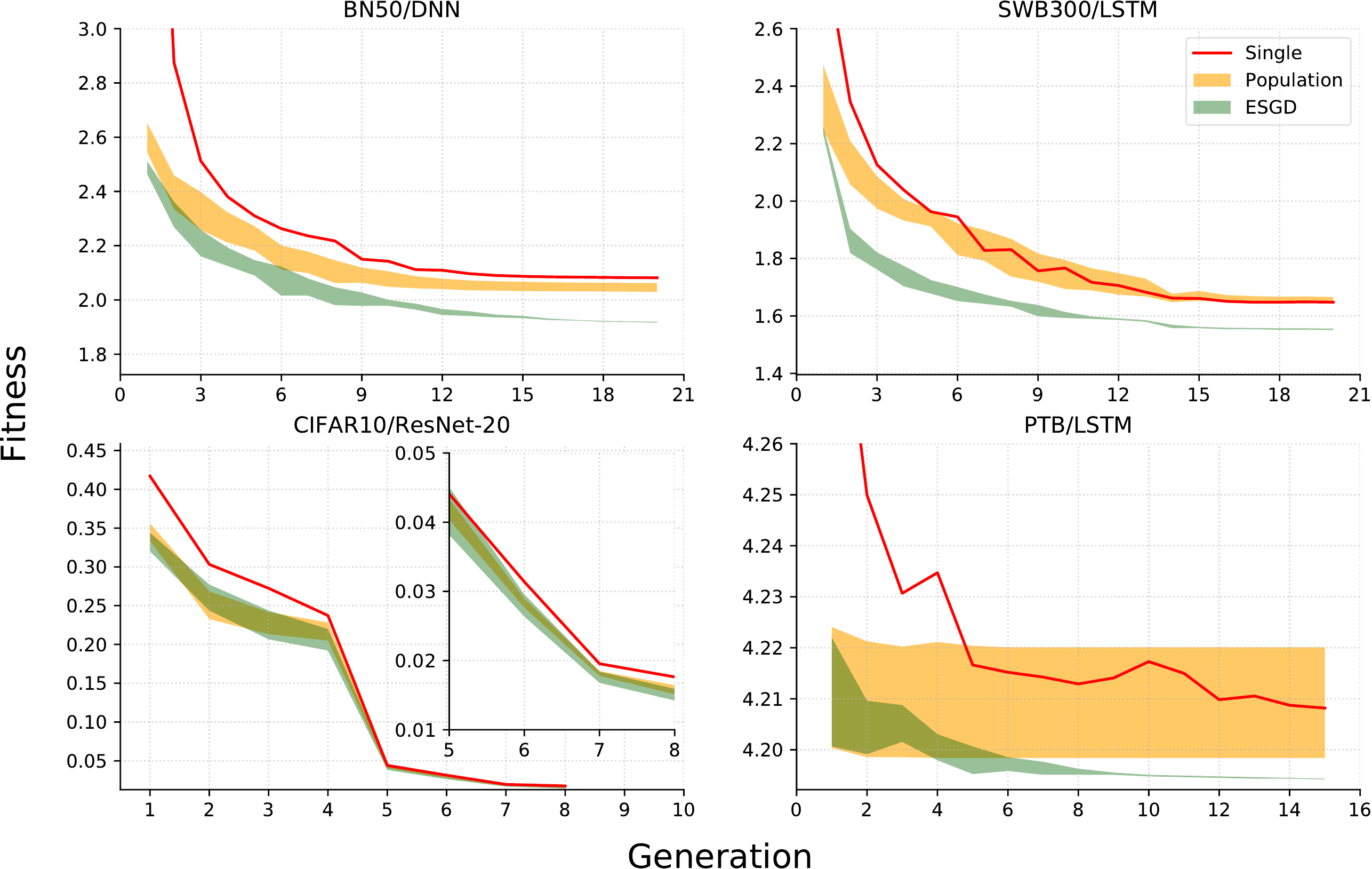}
    \caption{Fitness as a function of ESGD generations for BN50, SWB300, CIFAR10 and PTB. The three curves represent the single baseline (red), top 15 individuals of population baseline (orange) and ESGD (green). The latter two are illustrated as bands. The lower bounds of the ESGD curve bands indicate the best fitness values in the populations which are always non-increasing. Note that in the PTB case, this monotonicity is violated since the back-off strategy was relaxed with probability, which explains the increase of perplexity in some generations.}\label{fig:fitness}
\end{figure}

\begin{figure}[htb]
    \centering
    \includegraphics[height=4cm, width=\textwidth]{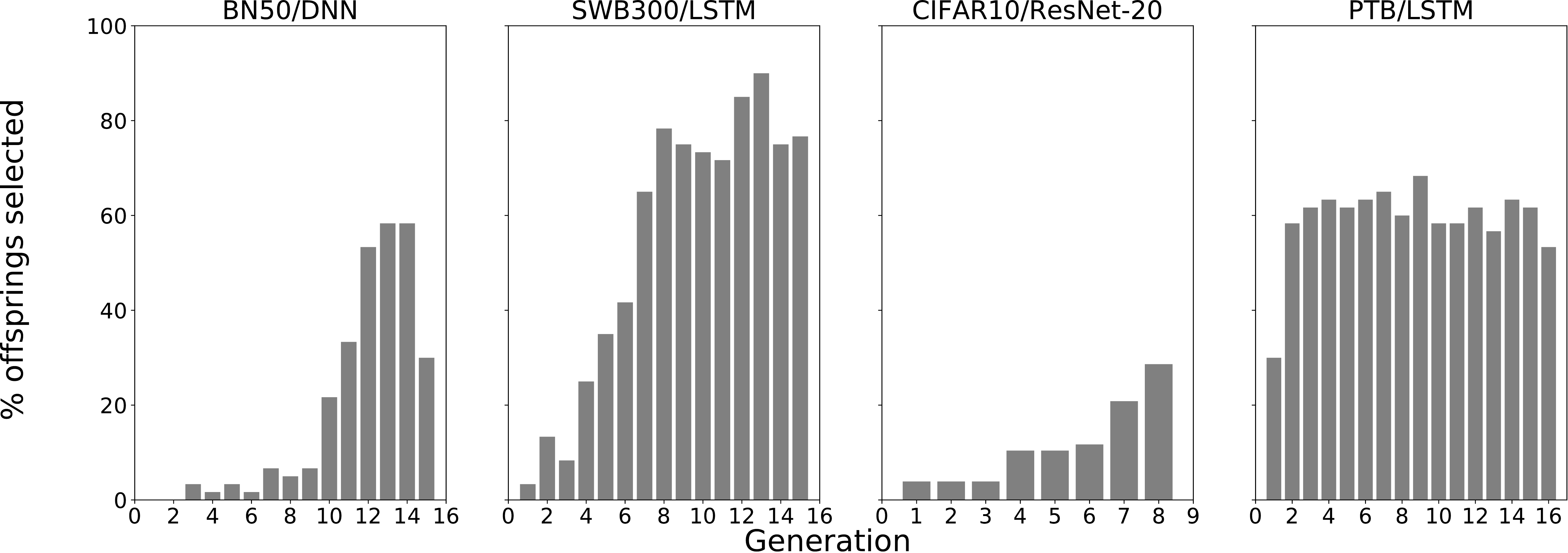}
    \caption{Percentage of offsprings selected in the 60\% m-elitist over generations of ESGD in BN50, SWB300, CIFAR10 and PTB.}\label{fig:offs}
\end{figure}

\subsection{Discussion}
\label{sec:dis}

\textbf{Population diversity} \ \  It is important to maintain a good population diversity in EA to avoid premature convergence due to the homogeneous fitness among individuals. In experiments, we find that the $m$-elitist strategy applied to the whole population, although has a better overall average fitness, can give rise to premature convergence in the early stage. Therefore, we set the percentage of $m$-elitist to 60\% of the population and the remaining 40\% population is generated by random selection. This $m$-elitist strategy is helpful in practice.

\textbf{Population evolvement} \ \ The SGD step of ESGD mimics the coevolution mechanism between competing species (individuals under different optimizers) where distinct species evolute independently. The evolution step of ESGD allows the species to interact with each other to hopefully produce promising candidate solutions for the next generation. Fig.\ref{fig:offs} shows the percentage of offsprings selected in the 60\% $m$-elitist for the next generation. From the table, in the early stage the population evolves dominantly based on SGD since the offsprings are worse than almost all the parents. However, in late generations the number of elite offsprings increases. The interaction between distinct optimizers starts to play an important role in selecting better candidate solutions.


\textbf{Complementary optimizers} \ \ In each generation of ESGD, an individual selects an optimizer from a pool of optimizers with certain hyper-parameters. In most of the experiments, the pool of optimizers consists of SGD variants and ADAM. It is often observed that ADAM tends to be aggressive in the early stage but plateaus quickly. SGD, however, starts slow but can get to better local optima. ESGD can automatically choose optimizers and their appropriate hyper-parameters based on the fitness value during the evolution process so that the merits of both SGD and ADAM can be combined to seek a better local optimal solution to the problem of interest. In the supplementary material examples are given where we show the optimizers with their training hyper-parameters selected by the best individuals in ESGD in each generation. It indicates that over generations different optimizers are automatically chosen by ESGD giving rise to a better fitness value.

\textbf{Parallel computation} \ \ In the experiments of this paper, all SGD updates and EA fitness evaluations are carried out in parallel using multiple GPUs. The SGD updates dominate the ESGD computation. The EA updates and fitness evaluations have a fairly small computational cost compared to the SGD updates. Given sufficient computing resource (e.g. $\mu$ GPUs), ESGD should take about the same amount of time as one end-to-end vanilla SGD run. Practically, trade-off has to be made between the training time and performance under the constraint of computational budget. In general, parallel computation is suitable and preferred for population-based optimization.

\section{Conclusion}
\label{sec:con}

We have presented the population-based ESGD as an optimization framework to combine SGD and gradient-free evolutionary algorithm to explore their complementarity. ESGD alternately optimizes the $m$-elitist average fitness of the population between an SGD step and an evolution step. The SGD step can be interpreted as a coevolution mechanism where individuals under distinct optimizers evolve independently and then interact with each other in the evolution step to hopefully create promising candidate solutions for the next generation. With an appropriate decision strategy, the fitness of the best individual in the population is guaranteed to be non-degrading. Extensive experiments have been carried out in three applications using various neural networks with deep architectures.  The experimental results have demonstrated the effectiveness of ESGD.

\bibliographystyle{unsrt}
\bibliography{esgd}

\newpage
\begin{appendix}

\begin{center}
\textbf{\LARGE{Supplementary Material}}
\end{center}
\vspace{1cm}

\section{Proof of Theorem 1}

\addtocounter{theorem}{-1}
\begin{theorem}
Let $\Psi_{\mu}$ be a population with $\mu$ individuals $\{\theta_{j}\}_{j=1}^{\mu}$. Suppose $\Psi_{\mu}$ evolves according to the ESGD algorithm given in Algorithm $\mathbf{1}$ with back-off and $m$-elitist. Then for each generation $k$,
\begin{align*}
        J^{(k)}_{\bar{m}:\mu} \leq J^{(k-1)}_{\bar{m}:\mu}, \ \ \ \ k \geq 1
\end{align*}
\end{theorem}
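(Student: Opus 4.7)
The plan is to show that both the SGD step and the evolution step are non-increasing on $J_{\bar{m}:\mu}$ individually, and then compose them. Fix a generation $k$ and write $a_j$ for the fitness of individual $\theta_j$ at the start of the generation (end of generation $k-1$) and $b_j$ for its fitness at the end of the SGD step.

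First I would handle the SGD step. Since the back-off reverts any SGD update that worsens the individual's fitness, the fitness of each individual is monotonically non-increasing across the $K_s$ inner iterations, so $b_j \le a_j$ for every $j \in \{1,\ldots,\mu\}$. I would then establish the following order-statistic lemma: if $b_j \le a_j$ component-wise, then for any $m \le \mu$,
\begin{equation*}
\frac{1}{m}\sum_{k=1}^{m} b_{k:\mu} \;\le\; \frac{1}{m}\sum_{k=1}^{m} a_{k:\mu}.
\end{equation*}
The one-line proof is that, letting $S^{*}$ be the index set of the $m$ smallest values of $\{a_j\}$, we have $\sum_{k=1}^{m} b_{k:\mu} \le \sum_{j\in S^{*}} b_j \le \sum_{j\in S^{*}} a_j = \sum_{k=1}^{m} a_{k:\mu}$, where the first inequality uses that the sum of the $m$ smallest $b$'s minimizes over all $m$-subsets and the second uses $b_j \le a_j$ pointwise. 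Applied to our setting, this gives $J^{(\text{SGD})}_{\bar{m}:\mu} \le J^{(k-1)}_{\bar{m}:\mu}$.

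Next I would handle the evolution step, iterated $K_v$ times. In each inner iteration an offspring population $\Psi_\lambda$ is generated and the combined pool $\Psi_{\mu+\lambda} = \Psi_\mu \cup \Psi_\lambda$ is sorted; the top $m$ (the $m$-elitist) are retained and the remaining $\mu-m$ slots are filled by randomly selected non-elitist candidates from $\Psi_{\mu+\lambda}$. The key observation is that every one of the randomly chosen $\mu-m$ fill-in individuals has fitness no better than each of the $m$ elitists (by definition of ``non-$m$-elitist''), so after the selection the $m$ smallest fitness values of the new population coincide exactly with the $m$-elitist extracted from $\Psi_{\mu+\lambda}$. Therefore
\begin{equation*}
J^{(\text{new})}_{\bar{m}:\mu} \;=\; \frac{1}{m}\sum_{k=1}^{m} f(\theta_{k:\mu+\lambda}) \;\le\; \frac{1}{m}\sum_{k=1}^{m} f(\theta_{k:\mu}) \;=\; J^{(\text{old})}_{\bar{m}:\mu},
\end{equation*}
since taking the $m$ smallest values over a superset $\Psi_{\mu+\lambda}\supseteq \Psi_\mu$ can only decrease the sum. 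Iterating this across the $K_v$ evolution substeps gives $J^{(k)}_{\bar{m}:\mu} \le J^{(\text{SGD})}_{\bar{m}:\mu}$.

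Chaining the two bounds yields $J^{(k)}_{\bar{m}:\mu} \le J^{(k-1)}_{\bar{m}:\mu}$, which is the claim. The only place that requires any care is the order-statistic lemma in the SGD step, because the pointwise inequality $b_j \le a_j$ does not immediately imply an inequality on the sorted sums under an arbitrary permutation; the trick of testing the $b$-minimizing subset against the fixed index set $S^{*}$ is what makes it go through. The evolution step, by contrast, is essentially tautological once one notices that the random fill-ins cannot displace any member of the $m$-elitist from being among the top $m$ of the new population.
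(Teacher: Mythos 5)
Your proposal is correct and follows essentially the same two-part decomposition as the paper's proof: the back-off makes each individual's fitness non-increasing through the SGD step, and the $m$-elitist selection over the combined pool $\Psi_{\mu}\cup\Psi_{\lambda}$ makes the evolution step non-increasing, after which the two bounds are chained. The only difference is that you explicitly prove the order-statistic step (that pointwise improvement implies improvement of the sum of the $m$ smallest values, via the subset $S^{*}$), which the paper simply asserts with ``it follows that''; this is a welcome extra bit of rigor but not a different route.
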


\begin{proof}
Given the parent population at the $k$-th generation, $\Psi^{(k)}_{\mu} = \{\theta^{(k)}_{1}, \cdots, \theta^{(k)}_{\mu}\}$, consider the SGD step of ESGD.  Let $\theta^{(k)}_{j,s}$ denote the individual $j$ after epoch $s$. If
\begin{align*}
    R_{n}(\theta^{(k)}_{j,s}) > R_{n}(\theta^{(k)}_{j,s-1}), \ \ \ \ s \geq 1
\end{align*}
the individual will back off to the previous update
\begin{align*}
    \theta^{(k)}_{j,s} = \theta^{(k)}_{j,s-1}
\end{align*}
Thus we have
\begin{align*}
    R_{n}(\theta^{(k)}_{j,K_{s}}) \leq R_{n}(\theta^{(k)}_{j,0})
\end{align*}
Let
\begin{align*}
     \tilde{J}^{(k)}_{\bar{m}:\mu, s} = \frac{1}{m}\sum_{j=1}^{m} R(\theta^{(k)}_{j:\mu,s})
\end{align*}
be the $m$-elitist average fitness of the population after $s$ SGD epochs in generation $k$.  It follows that after the SGD step,
\begin{align}
   \tilde{J}^{(k)}_{\bar{m}:\mu, K_{s}} \leq \tilde{J}^{(k)}_{\bar{m}:\mu, 0} = J^{(k-1)}_{\bar{m}:\mu}   \label{eqn:esgd_sgd_j}
\end{align}
where the equation
\begin{align}
    \tilde{J}^{(k)}_{\bar{m}:\mu, 0} = J^{(k-1)}_{\bar{m}:\mu}
\end{align}
is due to the fact that the starting parent population of SGD of generation $k$ is the population after the generation $k\!-\!1$.

Now consider the evolution step, since the $m$-elitist is conducted on $\Psi^{(k)}_{\mu+\lambda} = \Psi^{(k)}_{\mu} \bigcup \Psi^{(k)}_{\lambda}$, it is obvious that
\begin{align}
   J^{(k)}_{\bar{m}:\mu}  \leq \tilde{J}^{(k)}_{\bar{m}:\mu, K_{s}}  \label{eqn:esgd_es_j}
\end{align}
where $J^{(k)}_{\bar{m}:\mu}$ is the $m$-elitist average fitness after the evolution step in generation $k$.

From Eq.\ref{eqn:esgd_sgd_j} and Eq.\ref{eqn:esgd_es_j}, we have
\begin{align}
        J^{(k)}_{\bar{m}:\mu} \leq J^{(k-1)}_{\bar{m}:\mu}, \ \ \text{for} \ \ k \geq 1
\end{align}
This completes the proof.
\end{proof}

\vspace{1cm}

\section{Details on speech recognition experiments}

The optimizers being considered for ESGD are SGD and ADAM. For SGD, randomized hyper-parameters are learning rate, momentum and nesterov acceleration. Specifically, there is a 20\% chance not using momentum and 80\% chance using it. When using the momentum, there is a 50\% chance using the nesterov acceleration. The momentum is randomly selected from $[0.1, 0.9]$. The learning rate is also randomly selected from a range $[a_{k}, b_{k}]$ depending on the generation $k$. The upper and lower bounds of the range are annealed over generations starting from the initial range $[a_{0}, b_{0}]$.  For ADAM, $\beta_{1}=0.9$ and $\beta_{2}=0.999$ are fixed and only the learning rate is randomized. The selection strategy of learning rate is analogous to that of SGD except starting with a distinct initial range. For ESGD, the mutation strength $\sigma_{k}$ (i.e. the variance of the Gaussian noise perturbation) is also annealed over generations. Tables \ref{tab:hp_bn50} and \ref{tab:hp_swb300} give the hyper-parameter settings for the ESGD experiments on BN50 and SWB300.

\begin{table}[htb]
\caption{Hyper-parameters of ESGD for BN50}\label{tab:hp_bn50}
\centering
\begin{tabular}{l|c c} \hline
                          &     SGD                      &     ADAM                     \\ \hline\hline
  $\mu$                   &     100                      &     100                      \\ \hline
  $\lambda$               &     400                      &     400                      \\ \hline
  $\rho$                  &     2                        &     2                        \\ \hline
  $K_{s}$                 &     1                        &     1                        \\ \hline
  $K_{v}$                 &     1                        &     1                        \\ \hline
  $a_{0}$                 &     1e-4                     &     1e-4                     \\ \hline
  $b_{0}$                 &     2e-3                     &     1e-3                     \\ \hline
  $\gamma$                &     0.9                      &     0.9                      \\ \hline
  $a_{k}$                 &     $\gamma^{k}a_{0}$        &     $\gamma^{k}a_{0}$        \\ \hline
  $b_{k}$                 &     $\gamma^{k}b_{0}$        &     $\gamma^{k}b_{0}$        \\ \hline
  momentum                &     [0.1, 0.9]               &     [0.1, 0.9]               \\ \hline
  $\sigma_{0}$            &     0.01                     &     0.01                     \\ \hline
  $\sigma_{k}$            &     $\frac{1}{k}\sigma_{0}$  &     $\frac{1}{k}\sigma_{0}$  \\ \hline\hline
\end{tabular}
\end{table}

\begin{table}[htb]
\caption{Hyper-parameters of ESGD for SWB300}\label{tab:hp_swb300}
\centering
\begin{tabular}{l|c c} \hline
                          &     SGD                      &     ADAM                     \\ \hline\hline
  $\mu$                   &     100                      &     100                      \\ \hline
  $\lambda$               &     400                      &     400                      \\ \hline
  $\rho$                  &     2                        &     2                        \\ \hline
  $K_{s}$                 &     1                        &     1                        \\ \hline
  $K_{v}$                 &     1                        &     1                        \\ \hline
  $a_{0}$                 &     1e-2                     &     5e-5                     \\ \hline
  $b_{0}$                 &     3e-2                     &     1e-3                     \\ \hline
  $\gamma$                &     0.9                      &     0.9                      \\ \hline
  $a_{k}$                 &     $\gamma^{k}a_{0}$        &     $\gamma^{k}a_{0}$        \\ \hline
  $b_{k}$                 &     $\gamma^{k}b_{0}$        &     $\gamma^{k}b_{0}$        \\ \hline
  momentum                &     [0.1, 0.9]               &     [0.1, 0.9]               \\ \hline
  $\sigma_{0}$            &     0.01                     &     0.01                     \\ \hline
  $\sigma_{k}$            &     $\frac{1}{k}\sigma_{0}$  &     $\frac{1}{k}\sigma_{0}$  \\ \hline\hline
\end{tabular}
\end{table}

\section{Details on image recognition experiments}

In the CIFAR10 experiment, training-data transformation takes three steps: (1) color normalization, (2) horizontal flip,  and (3) random cropping. Test-data transformation is done via color normalization.

Initially, we mixed ADAM optimizer and SGD optimizer with a 20/80 ratio. While this setup enabled ESGD to yield significantly better evaluation loss, it however generated less accurate models due to the apparent Adam's poor generalization on CIFAR dataset. This phenomenon is consistent with what is reported in ~\cite{marginaladam, SWATS} that adaptive gradient methods, such like ADAM, suffer from generalization problem on CIFAR10. We also experimented with different probability of turning on Nesterov momentum acceleration, setting different momentum range, yet we did not observe noticeable improvement over baseline. Eventually, we found the configuration that consistently works the best is the following setup: Multiply base learning rate \footnote{Base learning rate is 0.1 for the first 80 epochs, 0.01 for another 41 epochs, and 0.001 for the remaining epochs} with a number randomly sampled between 0.9 and 1.1. Always turn on Nesterov momentum and set it to be 0.9. When SGD yields a worse model (in terms of fitness score), always roll back to the model from the previous generation. The elitist parameter $m$ is set to be 60\% of the population and number of parents is 2. The mutation strength is set to be $\frac{1}{k}\times0.01$, where $k$ is the generation number.  We summarize this setup in Table~\ref{tab:cifar10_esgd_hyper_param}.
\begin{table}[htb]
  \caption{Hyper-parameters of ESGD for CIFAR10}
  \label{tab:cifar10_esgd_hyper_param}
\centering
\begin{tabular}{l|c} \hline
                          &     SGD                     \\ \hline\hline
  $\mu$                   &     128                     \\ \hline
  $\lambda$               &     768                     \\ \hline
  $\rho$                  &     2                       \\ \hline
  $K_{s}$                 &     20                       \\ \hline
  $K_{v}$                 &     1                       \\ \hline
  $a_{k}$                 &     0.9       \\ \hline
  $b_{k}$                 &     1.1       \\ \hline
  momentum                &     0.9              \\ \hline
  $\sigma_{0}$            &     0.01                    \\ \hline
  $\sigma_{k}$            &     $\frac{1}{k}\sigma_{0}$ \\ \hline\hline
\end{tabular}
\end{table}

\section{Details on language modeling experiments}
The Penn Treebank dataset is defined as a subsection of the Wall Street Journal corpus. Its preprocessed version contains roughly one million running words, a vocabulary of 10k words, and well defined training, validation, and test sets. Our model is based on \cite{zolna2018fraternal}: the embedding layer of the model contained 655 nodes, and the size of a single LSTM hidden layer is equal to the embedding size.
The model uses various dropout techniques: hidden activation, weight/embedding, variational, fraternal dropout \cite{zolna2018fraternal,JMLR:v15:srivastava14a,pmlr-v28-wan13,Gal:2016}.
In contrast to the other experimental setups, picking an optimizer also includes the randomization of the following hyper-parameters during population based training (ESGD or population baseline): embedding, hidden activation, LSTM weight dropout ratios, mini batch size, patience, weight decay, dropout model (expectation linear dropout (ELD), fraternal dropout (FD), $\Pi$- (PD) or standard single output model).
Population size, number of offsprings, SGD variant and ADAM optimizer, momentum, and learning rate intervals were chosen similar to the ASR setup.
The population size was fixed to 100, elitism to 60\% of the population and 400 offsprings were generated before the selection step.
The ESGD ran for 15 generations and each SGD step performed 20 epochs before carrying out the evolutionary step.
Furthermore, in case of unsuccessful SGD step, the backing off was deactivated by 0.3 probability.
The parameters and their range are shown in Table \ref{tab:param_ptb}.
The first population was initialized by mutation of baseline model.
In addition, the baseline was specifically added to the population before the evolutionary step.

\begin{table}[htb]
\caption{Hyper-parameters of ESGD for Penn Treebank}
\label{tab:param_ptb}
\centering
\begin{tabular}{l|c c} \hline
                          &     SGD                      &     ADAM                     \\
  \hline
  \hline
  $\mu$                   &     \multicolumn{2}{c}{100}                                 \\ \hline
  $\lambda$               &     \multicolumn{2}{c}{400}                                 \\ \hline
  $\rho$                  &     \multicolumn{2}{c}{\{1,2\}}                             \\ \hline
  $K_{s}$                 &     \multicolumn{2}{c}{20}                                  \\ \hline
  $K_{v}$                 &     \multicolumn{2}{c}{1}                                   \\ \hline
  $a_{0}$                 &     1                        &     1e-6                     \\ \hline
  $b_{0}$                 &     60                       &     1e-3                     \\ \hline
  $\gamma$                &     \multicolumn{2}{c}{0.33}                                \\ \hline
  $a_{k}$                 &     \multicolumn{2}{c}{$\gamma^{k}a_{0}$}                   \\ \hline
  $b_{k}$                 &     \multicolumn{2}{c}{$\gamma^{k}b_{0}$}                   \\ \hline
  $p_\text{momentum}$     &     \multicolumn{2}{c}{0.8}                                 \\ \hline
  $p_\text{Nesterov}$     &     0.5                      &     0                        \\ \hline
  momentum                &     \multicolumn{2}{c}{[0.1, 0.9]}                          \\ \hline
  $\sigma_{0}$            &     \multicolumn{2}{c}{[0.01, 0.15]}                        \\ \hline
  $\sigma_{k}$            &     $\frac{1}{k}\sigma_{0}$  &     $\frac{1}{k}\sigma_{0}$  \\ \hline
  fitness smoothing       &     \multicolumn{2}{c}{0.5}                                 \\ \hline
  dropout$_\text{embedding}$    & \multicolumn{2}{c}{[0.05, 0.2]}                       \\ \hline
  dropout$_\text{hidden}$       & \multicolumn{2}{c}{[0.1, 0.65]}                       \\ \hline
  dropout$_\text{LSTM weights}$ & \multicolumn{2}{c}{[0.1, 0.65]}                       \\ \hline
  model$_\text{dropout}$        & \multicolumn{2}{c}{\{FD,ELD,PM,Standard\}}            \\ \hline           
  patience                      & \multicolumn{2}{c}{[3, 10]}            \\ \hline           
  weight decay                  & \multicolumn{2}{c}{[0, 1.2e-5]} \\ \hline
  $p_\text{backoff}$            & \multicolumn{2}{c}{0.7}         \\ \hline
  mini batch                    & \multicolumn{2}{c}{[20, 64]}    \\
  \hline
  \hline
\end{tabular}
\end{table}

\section{Examples on complementary optimizers}

Table~\ref{tab:comp_opt} gives the selected optimizers together with their hyper-parameters for each generation of ESGD in BN50 and SWB300.

\begin{table}[htb]
\caption{The optimizers selected by the best candidate in the population over generations in ESGD in BN50 DNNs and SWB300 LSTMs.}
\label{tab:comp_opt}
\centering
\begin{tabular}{l|c|c} \hline
   generation     &    \multicolumn{2}{c}{optimizer}      \\ \cline{2-3}
                  &    BN50 DNN                                          &   SWB300 LSTM        \\ \hline\hline
       1          &    adam, lr=2.68e-4                                &   adam, lr=5.39e-4                                                                \\ \hline
       2          &    adam, lr=1.10e-4                                &   adam, lr=4.61e-5                                                                \\ \hline
       3          &    adam, lr=1.87e-4                                &   adam, lr=9.65e-5                                                                \\ \hline
       4          &    sgd, nesterov=F, lr=3.61e-4, momentum=0     &   sgd, nesterov=F, lr=9.92e-3, momentum=0.21                                  \\ \hline
       5          &    adam, lr=9.07e-5                                &   sgd, nesterov=T, lr=6.60e-3, momentum=0.35                                   \\ \hline
       6          &    adam, lr=1.04e-4                                &   adam, lr=4.48e-5                                                                \\ \hline
       7          &    sgd, nesterov=F, lr=5.20e-4, momentum=0     &   sgd, nesterov=T, lr=5.60e-3, momentum=0.11                                   \\ \hline
       8          &    sgd, nesterov=T, lr=1.00e-4, momentum=0.44   &   sgd, nesterov=T, lr=5.15e-3, momentum=0.49                                   \\ \hline
       9          &    adam, lr=6.45e-5                                &   adam, lr=3.20e-5                                                                \\ \hline
      10          &    adam, lr=7.51e-5                                &   adam, lr=2.05e-5                                                                \\ \hline
      11          &    adam, lr=4.51e-5                                &   adam, lr=1.97e-5                                                                \\ \hline
      12          &    sgd, nesterov=F, lr=4.19e-5, momentum=0     &   adam, lr=2.98e-5                                                                \\ \hline
      13          &    sgd, nesterov=F, lr=6.17e-5, momentum=0.25  &   adam, lr=1.94e-5                                                                \\ \hline
      14          &    sgd, nesterov=F, lr=6.73e-5, momentum=0.45  &   adam, lr=1.33e-5                                                                \\ \hline
      15          &    sgd, nesterov=F, lr=1.00e-4, momentum=0     &   adam, lr=1.45e-5                                                                \\ \hline
\end{tabular}
\end{table}

\newpage


\end{appendix}

\end{document}